\def\noheaderplainsetup{

\topmargin=0pt \headheight=0pt \headsep=0pt  \oddsidemargin=0pt \evensidemargin=0pt  \textheight=9.1truein \textwidth=6.5truein}   
\begin{document}

\newcommand{\code}[1]{\ulcorner #1 \urcorner}
\newcommand{\mldi}{\hspace{2pt}\mbox{\footnotesize $\vee$}\hspace{2pt}}
\newcommand{\mlci}{\hspace{2pt}\mbox{\footnotesize $\wedge$}\hspace{2pt}}
\newcommand{\emptyrun}{\langle\rangle} %empty run
\newcommand{\oo}{\bot}            %opponent, the corresponding truth value and label
\newcommand{\pp}{\top}            %proponent, the corresponding truth value and label
\newcommand{\xx}{\wp}               %player, the corresponding truth value and label
\newcommand{\legal}[2]{\mbox{\bf Lr}^{#1}_{#2}} %function telling what the legal positions are
\newcommand{\win}[2]{\mbox{\bf Wn}^{#1}_{#2}} %function telling who the winner is
 \newcommand{\one}{\mbox{\sc One}}
 \newcommand{\two}{\mbox{\sc Two}}
 \newcommand{\three}{\mbox{\sc Three}}
 \newcommand{\four}{\mbox{\sc Four}}
 \newcommand{\first}{\mbox{\sc Derivation}}
 \newcommand{\second}{\mbox{\sc Second}}
 \newcommand{\uorigin}{\mbox{\sc Org}}
 \newcommand{\image}{\mbox{\sc Img}}
 \newcommand{\limitset}{\mbox{\sc Lim}}
 \newcommand{\fif}{\mbox{\bf CL15}}
\newcommand{\col}[1]{\mbox{$#1$:}}

\newcommand{\sti}{\mbox{\raisebox{-0.02cm}
{\scriptsize $\circ$}\hspace{-0.121cm}\raisebox{0.08cm}{\tiny $.$}\hspace{-0.079cm}\raisebox{0.10cm}
{\tiny $.$}\hspace{-0.079cm}\raisebox{0.12cm}{\tiny $.$}\hspace{-0.085cm}\raisebox{0.14cm}
{\tiny $.$}\hspace{-0.079cm}\raisebox{0.16cm}{\tiny $.$}\hspace{1pt}}}
\newcommand{\costi}{\mbox{\raisebox{0.08cm}
{\scriptsize $\circ$}\hspace{-0.121cm}\raisebox{-0.01cm}{\tiny $.$}\hspace{-0.079cm}\raisebox{0.01cm}
{\tiny $.$}\hspace{-0.079cm}\raisebox{0.03cm}{\tiny $.$}\hspace{-0.085cm}\raisebox{0.05cm}
{\tiny $.$}\hspace{-0.079cm}\raisebox{0.07cm}{\tiny $.$}\hspace{1pt}}}

\newcommand{\seq}[1]{\langle #1 \rangle}           % sequence: <...>

%     OPERATORS:

\newcommand{\mla}{\mbox{{\Large $\wedge$}}}
\newcommand{\mle}{\mbox{{\Large $\vee$}}}

\newcommand{\pst}{\mbox{\raisebox{-0.01cm}{\scriptsize $\wedge$}\hspace{-4pt}\raisebox{0.16cm}{\tiny $\mid$}\hspace{2pt}}}
\newcommand{\gneg}{\neg}                  %game negation
\newcommand{\mli}{\rightarrow}                     %strong reduction
\newcommand{\cla}{\mbox{\large $\forall$}}      %blind universal quantifier
\newcommand{\cle}{\mbox{\large $\exists$}}        %blind existential quantifier
\newcommand{\mld}{\vee}    %multiplicative disjunction
\newcommand{\mlc}{\wedge}  %multiplicative conjunction
\newcommand{\ade}{\mbox{\Large $\sqcup$}}      %additive existential quantifier
\newcommand{\ada}{\mbox{\Large $\sqcap$}}      %additive universal quantifier
\newcommand{\add}{\sqcup}                      %additive disjunction
\newcommand{\adc}{\sqcap}                      %additive conjunction

\newcommand{\tlg}{\bot}               %classical \bot; trivially lost elementary game
\newcommand{\twg}{\top}               %classical \top; trivially won elementary game
\newcommand{\st}{\mbox{\raisebox{-0.05cm}{$\circ$}\hspace{-0.13cm}\raisebox{0.16cm}{\tiny $\mid$}\hspace{2pt}}}
\newcommand{\cst}{{\mbox{\raisebox{-0.05cm}{$\circ$}\hspace{-0.13cm}\raisebox{0.16cm}{\tiny $\mid$}\hspace{1pt}}}^{\aleph_0}} % countable recurrence
\newcommand{\cost}{\mbox{\raisebox{0.12cm}{$\circ$}\hspace{-0.13cm}\raisebox{0.02cm}{\tiny $\mid$}\hspace{2pt}}}
\newcommand{\ccost}{{\mbox{\raisebox{0.12cm}{$\circ$}\hspace{-0.13cm}\raisebox{0.02cm}{\tiny $\mid$}\hspace{1pt}}}^{\aleph_0}} % countable corecurrence
\newcommand{\pcost}{\mbox{\raisebox{0.12cm}{\scriptsize $\vee$}\hspace{-4pt}\raisebox{0.02cm}{\tiny $\mid$}\hspace{2pt}}}

\newcommand{\intimpl}{\mbox{\hspace{2pt}$\circ$\hspace{-0.14cm} \raisebox{-0.043cm}{\Large --}\hspace{2pt}}}
\newcommand{\sintimpl}{\mbox{\hspace{2pt}\raisebox{0.033cm}{\tiny $ | \hspace{-4pt} >$}\hspace{-0.14cm} \raisebox{-0.039cm}{\large --}\hspace{2pt}}}
\newcommand{\sst}{\mbox{\raisebox{-0.07cm}{\scriptsize $-$}\hspace{-0.2cm}$\pst$}}
\newcommand{\scost}{\mbox{\raisebox{0.20cm}{\scriptsize $-$}\hspace{-0.2cm}$\pcost$}}
\newcommand{\sqc}{\mbox{\hspace{2pt}\small \raisebox{0.0cm}{$\bigtriangleup$}\hspace{2pt}}}
\newcommand{\sqci}{\mbox{\scriptsize \raisebox{0.0cm}{$\bigtriangleup$}}}
\newcommand{\sqd}{\mbox{\hspace{2pt}\small \raisebox{0.06cm}{$\bigtriangledown$}\hspace{2pt}}}
\newcommand{\sqdi}{\mbox{\scriptsize \raisebox{0.05cm}{$\bigtriangledown$}}}
\newcommand{\sqe}{\mbox{\large \raisebox{0.07cm}{$\bigtriangledown$}}}
\newcommand{\sqa}{\mbox{\large \raisebox{0.0cm}{$\bigtriangleup$}}}
\newcommand{\tgd}{\mbox{\hspace{2pt}$\vee$\hspace{-1.29mm}\raisebox{0.1mm}{\rule{0.13mm}{2mm}}\hspace{5pt}}}    %toggling disjunction
\newcommand{\tgc}{\mbox{\hspace{2pt}$\wedge$\hspace{-1.29mm}\raisebox{0.02mm}{\rule{0.13mm}{2mm}}\hspace{5pt}}}    %toggling conjunction
\newcommand{\tge}{\hspace{1pt}\mbox{\Large $\vee$\hspace{-1.84mm}\raisebox{0.1mm}{\rule{0.13mm}{3.0mm}}\hspace{6pt}}}   
\newcommand{\tga}{\mbox{\hspace{1pt}\Large $\wedge$\hspace{-1.84mm}\raisebox{0.02mm}{\rule{0.13mm}{3.0mm}}\hspace{6pt}}}     
\newcommand{\tgpst}{\mbox{\raisebox{-0.01cm}{\scriptsize $\wedge$}\hspace{-4pt}\raisebox{0.06cm}{\small $\mid$}\hspace{2pt}}}
\newcommand{\tgpcost}{\mbox{\raisebox{0.12cm}{\scriptsize $\vee$}\hspace{-3.8pt}\raisebox{0.04cm}{\small $\mid$}\hspace{2pt}}}
\newcommand{\tgst}{\mbox{\raisebox{-0.05cm}{$\circ$}\hspace{-0.12cm}\raisebox{0.05cm}{\small $\mid$}\hspace{2pt}}} 
\newcommand{\tgcost}{\mbox{\raisebox{0.12cm}{$\circ$}\hspace{-0.12cm}\raisebox{0.04cm}{\small $\mid$}\hspace{2pt}}}

%   NUMERATED ITEMS and ENVIRONMENTS

\newtheorem{theoremm}{Theorem}[section]
\newtheorem{conditionss}{Condition}[section]
\newtheorem{thesiss}[theoremm]{Thesis}
\newtheorem{definitionn}[theoremm]{Definition}
\newtheorem{lemmaa}[theoremm]{Lemma}
\newtheorem{notationn}[theoremm]{Notation}
\newtheorem{propositionn}[theoremm]{Proposition}
\newtheorem{conventionn}[theoremm]{Convention}
\newtheorem{examplee}[theoremm]{Example}
\newtheorem{remarkk}[theoremm]{Remark}
\newtheorem{factt}[theoremm]{Fact}
\newtheorem{exercisee}[theoremm]{Exercise}
\newtheorem{questionn}[theoremm]{Open Problem}
\newtheorem{conjecturee}[theoremm]{Conjecture}

\newenvironment{exercise}{\begin{exercisee} \em}{ \end{exercisee}}
\newenvironment{definition}{\begin{definitionn} \em}{ \end{definitionn}}
\newenvironment{theorem}{\begin{theoremm}}{\end{theoremm}}
\newenvironment{lemma}{\begin{lemmaa}}{\end{lemmaa}}
\newenvironment{proposition}{\begin{propositionn} }{\end{propositionn}}
\newenvironment{convention}{\begin{conventionn} \em}{\end{conventionn}}
\newenvironment{remark}{\begin{remarkk} \em}{\end{remarkk}}
\newenvironment{proof}{ {\bf Proof.} }{\  \rule{2.5mm}{2.5mm} \vspace{.2in} }
\newenvironment{idea}{ {\bf Idea.} }{\  \rule{1.5mm}{1.5mm} \vspace{.15in} }
\newenvironment{example}{\begin{examplee} \em}{\end{examplee}}
\newenvironment{fact}{\begin{factt}}{\end{factt}}
\newenvironment{notation}{\begin{notationn} \em}{\end{notationn}}
\newenvironment{conditions}{\begin{conditionss} \em}{\end{conditionss}}
\newenvironment{question}{\begin{questionn}}{\end{questionn}}
\newenvironment{conjecture}{\begin{conjecturee}}{\end{conjecturee}}

\newcommand{\Dscr}{\cal D}
 \newcommand{\propel}{\mbox{\bf CL1}}
\newcommand{\propelw}{\mbox{\bf CL1$^\Omega$}}
\newcommand{\propeltw}{\mbox{\bf CL12}}
\newcommand{\colw}{\propelw}
\newenvironment{exmple}{
 \begingroup \begin{tabbing} \hspace{2em}\= \hspace{3em}\= \hspace{3em}\=
\hspace{3em}\= \hspace{3em}\= \hspace{3em}\= \kill}{
 \end{tabbing}\endgroup}
\newenvironment{example2}{
 \begingroup \begin{tabbing} \hspace{8em}\= \hspace{2em}\= \hspace{2em}\=
\hspace{10em}\= \hspace{2em}\= \hspace{2em}\= \hspace{2em}\= \kill}{
 \end{tabbing}\endgroup}

\title{Towards Distributed Logic Programming based on Computability Logic}
\author{Keehang Kwon  \\ 
  {\small Department of Computing Sciences, DongA University, South Korea.
 khkwon@dau.ac.kr}}
\date{}
\maketitle

%\tableofcontents

{\bf Abstract:}
%\marginpar{intr}

{\em Computability logic} (CoL) is a powerful computational model which views 
computational problems as games played by a machine and its environment. 
In this paper,  we show that CoL
naturally supports   multiagent
programming models with distributed control. 
To be specific, we discuss 
a distributed logic programming  model based on CoL (CL1 to be exact), which we call 
\colw.
The key feature of this model is that it supports $dynamic/evolving$  
knowledgebase of an agent. This model turns out to be 
a promising approach to reaching both general AI and future computing model.

{Keywords:}
 Computability logic; MultiAgent Programming; 
General Artificial Intelligence.

%%%%%%%%%%%%%%%%%%%%%%%%%%% Main Text Begin %%%%%%%%%%%%%%%%%%%%%%%%%%%%%%%%%

\section{Introduction}\label{sintr}
%\marginpar{sintr}

{\em Computability logic} (CoL) \cite{Jap0}-\cite{JapCL12}, is an
elegant theory of (multi-)agent computability. In CoL, computational problems are seen as games between a machine and its environment and logical operators stand for operations on games.
It understands interaction among agents in its most general --- game-based --- sense. On the other hand, other formalisms such as situation calculus
 appear to be too rudimentary to represent complex interactions among agents.
 In particular,  CoL supports 
 
 \[  query/knowledge\ duality \]
  (or we call it
  `querying knowledge'):
 what is a query for one agent becomes  new knowledge for another agent.
 This duality leads to  $evolving$ knowledgebase and has many
 attractive features such as local namespace.
  Note that traditional agent/object-oriented approaches \cite{LCF} fail to support this duality.
Therefore,  CoL  provides a promising basis
for  multiagent programming.

In this paper, we discuss a distributed agent programming model based on CoL, which
can also be seen as a distributed logic programming model with distributed processing.
%%%
In CoL, the environment is assumed to be an unpredictable, capricious user. In 
contrast, we make it possible for an environment to be specified as a machine with  
determined, algorithmic behavior.

We assume the following in our model:

\begin{itemize}

\item Each agent corresponds  to a memory location or a 
web site with a URL. An agent's  knowledgebase(KB) is stored in its location.

\item  Agents are initially inactive. An  inactive agent  becomes activated  when another agent  invokes a query  for the former.

\item  Our model supports query/knowledge duality and querying knowledge.
  That is, knowledge of an agent can be obtained from another agent by invoking
  queries to the latter.
\end{itemize}

To make things simple, we choose \propel -- the most basic fragment of CoL -- as our target language.  \propel\ is  obtained  by adding to  classical propositional logic two additional choice operators: disjunction ($\add$) and conjunction ($\adc$) operators. The choice disjunction  
$\add$ models decision steps by the machine.
The choice conjunction  $\adc$ models decision steps by the environment. For example, $green \add red$ is a game
where the machine must choose either $green$ or $red$, while $green \adc red$ is a game
where the environment must choose either $green$ or $red$.  In the former, if the machine chooses $green(red)$,
then we say  $green \add red$ {\it evolves to} $green(red)$. Similarly for $green \adc red$.

In this paper, we present \colw\
which is a  web-based implementation of \propel. This implementation is  simple and
straightforward.
What is interesting is that \propelw\ is a novel and promising distributed (logic)
programming model with evolving knowledgebase.
It would
  provide a good starting point for future
 distributed logic programming as well as high-level web programming.

\section{Preliminaries}\label{s2}

In this section a  brief overview of CoL is given. 

There are two players: the machine $\pp$ and the environment $\oo$.

There are {\em elementary} atoms $p$, $q$, \ldots to represent elementary games.
\begin{description}
\item[Constant elementary games]  $\twg$ is always a true proposition, and $\tlg$ is always a false proposition.

\item[Negation]
 $\gneg$ is a role-switch operation: For example, $\gneg (0=1)$ is true,
while $(0=1)$ is false.

\item[Choice operations]
The choice group of operations:  $\adc$, $\add$ are defined below.

$A_0 \adc A_1$ is the game where, in the initial position, only $\oo$ has a legal move which consists in 
choosing $i$ in $\{0,1 \}$. After $\oo$ makes a move $i\in\{0,1 \}$, 
the game continues as $A_i$. 
 $\add$ is 
symmetric to $\adc$ with
the  difference that now it is $\pp$ who makes an initial move.

\item[Parallel operations]
Playing $A_1\mlc\ldots\mlc A_n$ means playing the $n$ games concurrently.  In order to win,  $\pp$ needs to win in each of $n$ games. Playing  $A_1\mld\ldots\mld A_n$ also means playing the $n$ games concurrently.  In order to win,  $\pp$ needs to win  one of the games. To indicate that a given move is made in the $i$th component, the player should prefix it with the string ``$i.$".

\item[Reduction]
 $A\mli B$ is defined  by $\gneg A\mld B$.
Intuitively, $A\mli B$ is the problem of reducing $B$ ({\em consequent}) to $A$ ({\em antecedent}).  

\end{description}

\section{General AI}\label{sec:intro}

In this section, 
we present a promising approach to reaching general AI.
Central to our approach is the concept of games\cite{Jap0, Japtow}.
This concept makes it possible to build intelligent AI
in a simplest possible way, as 
complex interactions among agents can be captured by games.
That is,  general AI is nothing but a group of agents playing games (or providing services to ) against others.

To be specific, we use the following idea: \\

   Agent = KB + Query \\

\noindent where an agent tries to solve Query using its knowledgebase
KB. Note that here KB and Q  both represent games and thus evolving.

 The following is a motivating example of \colw\ with agents $x,y,z,w,v,u,m,n$ and $o$.
   
  \begin{exmple}
 $x\ =\ p(3) \mlc \gneg p(100)$ \% $p(x)$  mean $x$ is prime. \\
 $y\ =\  (p(3)\add  p(5))^{ x }$ \\
$z\ =\  (p(4) \add p(5))^{ x }$ \\
$w\ =\ \gneg\ (p(9) \adc p(100))^x$ \\
$v\ =\ (\gneg p(9) \add \gneg p(100))^x$ \\
$m\ = \pp$, $[(p(0)\add p(3))^{y} \mli (p(0)\add p(3))^{u}]$ \\
$o\ = \pp$, $[(p(0)\add p(3))^{z} \mli (p(0)\add p(3))^{u}]$ \\
$n\ = \pp$, $[(p(100)\add \gneg p(100))^{w} \mli (p(100)\add 
\gneg p(100))^{u}]$

 \end{exmple}

\noindent 
 {\it Activating}  $y$  means $y$ is required to solve the incoming queries using the knowledgebase of $y$.

Now consider the machine $m$. It tries to solve the problem 
$(p(0)\add p(3))^{y} \mli (p(0)\add p(3))^{u}$ with
empty knowledgebase, denoted by $\pp$.
$m$ activates  $y$ which then tries to solve the goal $(p(0)\add p(3))^{m}$ using  $ (p(0)\add p(1))^{ x }$.
$y$ activates $x$ which then tries to solve the goal $(p(0)\add p(3))^{y}$  using $p(0)$.
This will succeed and $m$ eventually chooses  $p(3)^{u}$.  
Note that $o$ also wins the problem 
$(p(0)\add p(3))^{z} \mli (p(0)\add p(3))^{u}$,  as $z$ fails to make a move.

Similarly, consider the machine $n$. It tries to solve the problem 
$(p(100)\add \gneg p(100))^{w} \mli (p(100)\add \gneg p(100))^{u}$ with
empty knowledgebase.
$n$ activates  $w$ which then tries to solve the goal 
$(p(100)\add \gneg p(100))^{w}$ using  $\gneg (p(9)\adc p(100))^{ x }$.
$w$ activates $x$ which then tries to solve the goal 
$(\gneg p(9) \add \gneg p(100))^{x}$  using $\gneg p(100)$.
This will succeed by $x$ choosing $\gneg p(100)$.  $n$ eventually chooses  $(\gneg p(100))^{u}$.  
Note that  $v$ is logically equivalent to $w$.

\section{\propelw}\label{s2tb}
%\marginpar{s2tb}

 We review the most basic fragment of propositional computability 
logic  called $\propel$ \cite{JapCL1}. Its language extends that of classical propositional logic by incorporating into it $\adc$ and $\add$. As always, there are infinitely many {\bf atoms} in the language, for which we will be using the letters
$p,q,r,\ldots$ as metavariables.  The two  atoms: $\twg$ and $\tlg$ have a special status in that their interpretation is fixed.  Formulas of this language, referred to as {\bf $\propel$-formulas}, are built from atoms in the standard way:

\begin{definition}
  
  The class of $\propel$-formulas 
is defined as the smallest set of expressions such that all atoms are in it and, if $F$ and $G$ are in it, then so are 
$\gneg F$, $F\mlc G$, $F \mld G$, $F \mli G$, $F\adc G$, $F \add G$. 
\end{definition}

\begin{definition}
  Let $F$ be a  $\propel$-formula. An interpretation is a function $^*$ which sends $F$ to a game $F^*$. $F$ is  said to be valid if, for every interpretation $^*$, there is a machine who wins the
  game $F^*$ for all possible scenarios corresponding to different behaviors by the environment.
 \end{definition}

Now we define $\propelw$, a slight extension to $\propel$ with environment parameters.
Let $F$ be a  $\propel$-formula.
We introduce a new {\it env-annotated} formula $F^\omega$ which reads as `play $F$ against an agent $\omega$'
or `provide a service $F$ to $\omega$'.
For an $\adc$-occurrence $O$ in $F^\omega$, we say
%$O$ is in the scope of a game $F$ against $w$, $F$ is the $w$-specified game of $O$, and
$\omega$ is the {\it matching} environment of $O$.
For example, $(p \adc (q \adc r))^{\omega}$ is an  agent-annotated formula and
$\omega$ is the matching environment of both occurrences of $\adc$.
We extend this definition to subformulas and formulas. For a subformula $F'$ of the above $F^\omega$,
we say that $\omega$ is the {\it matching} environment of both $F'$ and $F$.

In introducing environments to a formula $F$, one issue is
whether we allow `env-switching' formulas of the form
$(F[R^u])^w$. Here $F[R]$ represents a formula with some occurrence of a subformula $R$.
That is, the machine initially plays $F$ against agent $w$ and then switches
to play against another agent $u$ in the course of playing $F$. This kind of formulas are  difficult to
process. For this reason, in this paper, we focus on non `env-switching' formulas.
This leads to the following definition: 

\begin{definition} 
  The class of $\propelw$-formulas
is defined as the smallest set of expressions such that
(a) For any $\propel$-formula $F$ and any agent $\omega$, $F^\omega$  are in it and, (b) if $H$ and $J$ are in it, then so are 
$\gneg H$, $H\mlc J$, $H\mld J$, $H\mli J$.
\end{definition}
\noindent
%%%% add 
In the above, $F^\omega$ denotes that the (current) machine provides a service $F$ to $\omega$.
$\gneg (F^\omega)$ denotes that the machine receives a service $F$ from $\omega$ ( i.e. the exchange of roles).
$F^\omega \mlc G^\mu$ denotes that the machine provides a service $F$ to $\omega$ and a service $G$ to
$\mu$. Similarly for $\mld, \mli$.

For example, suppose kim, pete are agents and
$p$ denotes a proposition.
Then,  $p^{kim} \mli p^{pete}$ denotes the following:
if $kim$ claims $p$ to the machine, then the machine can claim $p$ to pete.
This is clearly valid. 

We often use $F$ instead of $F^{\omega}$ when it is irrelevant.
For example, $p \mli p^{pete}$ denotes the following:
if some (unspecified) agent claims $p$ to the machine, then the machine can claim $p$ to pete.
Again, this is valid.

%%%%
Most old concepts such as validity extend to this new language.

\begin{definition}
  Let $J$ be a  $\propelw$-formula. An interpretation is a function $^*$ which sends $F$ to a game $F^*$. $J$ is  said to be valid if, for every interpretation $^*$, there is a machine who wins the
  game $J^*$ for all possible scenarios corresponding to different behaviors by {\it any} environments.
 \end{definition}

\begin{definition}
\noindent Given a $\propelw$-formula $J$,  the skeleton  of $J$ -- denoted by
$skeleton(J)$ -- is obtained by
replacing every occurrence $F^\omega$ by $F$.
\end{definition}
\noindent For example, $skeleton((p \adc (q \adc r))^{\omega}) = p \adc (q \adc r)$.

%We say a  is well-formed if each agent in it must appear only once.
%In the sequel, we focus only on well-formed formulas.

We assume that each agent is identified with a physical location 
and the KB of an agent is stored in its location.

The following definitions comes from \cite{JapCL1}. They  apply both to $\propel$ and $\propelw$.

Understanding $E\mli F$ as an abbreviation of $\neg E \mld F$, a {\bf positive} occurrence of a subformula is one that is in the scope of an even number of $\neg$'s. Otherwise, the occurrence is {\bf negative}.

A {\bf surface occurrence} of a subformula means an occurrence that is not in the scope of a choice ($\add$ or $\adc$) operator.

A formula is  {\bf elementary} iff it does not contain the choice operators.

The {\bf elementarization} of a formula is the result of replacing, in it, every surface occurrence of the form $F_1\add ... \add F_n$ by $\oo$ , and every surface occurrence of the form $F_1\adc ... \adc F_n$ by $\pp$.

A formula is {\bf stable} iff its elementarization is valid in classical logic, otherwise
it is {\bf instable}.

$F${\bf -specification} of $O$, where  $F$ is a formula and $O$ is a surface occurrence in $F$, is a string $\alpha$ which can be defined by:
\begin{itemize}

\item $F$-specification of the occurrence in itself is the empty string.

\item If $F$ = $\neg G$, then $F$-specification of an occurrence that happens to be in $G$ is the same as the $G$-specification of that occurrence.

\item If $F$ is $G_1\mlc ... \mlc G_n$, $G_1\mld ... \mld G_n$, or $G_1\mli G_2$, then $F$-specification of an occurrence that happens to be in $G_i$ is the string $i.\alpha$, where $\alpha$ is the $G_i$-specification of that occurrence.

\end{itemize}

The proof system of \propelw\ is identical to that $\propel$ and
has the following two rules, with $H$, $F$ standing for $\propelw$-formulas and $\vec H$ for a set
of $\propelw$-formulas: \\

Rule (A): ${\vec H}\vdash F$, where $skeleton(F)$ is stable and, whenever $F$ has a positive (resp. negative) surface occurrence of $G_1\adc ... \adc G_n$ (resp. $G_1\add ... \add G_n$) whose matching environment is $\omega$, for each i$\in \{1,...,n\}$, $\vec H$ contains the result of replacing in $F$ that occurrence by $G_i^\omega$.

Rule (B): $H\vdash F$, where $H$ is the result of replacing in $F$ a negative (resp. positive) surface occurrence of $G_1\adc ... \adc G_n$ (resp. $G_1\add ... \add G_n$) whose matching environment is $\omega$ by $G_i^\omega$ for some i$\in \{1,...,n\}$.

\begin{examplee}\label{ex01}

$\propelw \vdash ((p\adc q)\mlc(p\adc q))\mli (p\adc q)^\omega$

  where $p$, $q$ represent distinct non-logical atoms, and $\omega$ is an agent.
  Note that $\omega$ plays no roles in the proof procedure.
\end{examplee}
\begin{enumerate}

\item $(p\mlc p)\mli p^\omega$, rule A, no premise

\item $(q\mlc q)\mli q^\omega$, rule A, no premise

\item $((q\adc p)\mlc p)\mli p^\omega$, rule B, 1

\item $((p\adc q)\mlc (q\adc p))\mli p^\omega$, rule B, 3

\item $((p\adc q)\mlc q)\mli q^\omega$, rule B, 2

\item $((p\adc q)\mlc (p\adc q))\mli q^\omega$, rule B, 5

\item $((p\adc q)\mlc (p\adc q))\mli (p\adc q)^\omega$, rule A, 4 6

\end{enumerate}

\begin{examplee}\label{ex02}
$\propelw \vdash p\mli (q\add p)^\omega$

where $p$, $q$ represent distinct non-logical atoms.
\end{examplee}
\begin{enumerate}

\item $p\mli p^\omega$, rule (A).  no premise

\item $p\mli (q\add p)^\omega$, rule B. 1

\end{enumerate}

\section{\colw}\label{s22tog}

In our setting, an agent   has  knowledgebase and receives
multiple queries.
\colw\ is a set of agent declarations of the following form: \\

\begin{exmple}
\>  $\alpha_1 = H_1, Q_1$ \\
\> \> $\vdots$ \\
 \>  $\alpha_n = H_n,Q_n$ \\
 \end{exmple}
  \noindent
  In the above, each
   $\alpha_i$ is an agent, each $H_i$ is the knowledgebase of $\alpha_i$ written in   $\propelw$ and
   each $Q_i$ is a queue for storing the incoming queries. We often omit $Q_i$ if it is initially empty.

  \subsection{An Execution Model for a  Query}\label{s22tog}
  
We first consider a machine model with empty knowledgebase and a single query to process.
This machine is designed to decide whether the query is valid or not.

The machine model of \propel\ is designed to play against any environment, and thus
easily extended to the case of \propelw.
In our system, however,  for each occurrence of $F^\omega$,
we need to differentiate $F$ which is already in session from those 
who are $not$.  That is, we  invoke $F$ to $\omega$ only when $F$ is not in session.
Below the notation $F[E]$ represents a formula $F$ together with some
positive occurrence of a subformula $E$.

 Below we will introduce an algorithm that executes
a formula $J$ which has a $\propelw$-proof. The algorithm contains two stages: \\

Algorithm $Ex(J)$: \%  $J$ is a $\propelw$-formula with a proof\\

\begin{enumerate}

\item First stage is to initialize a temporary variable $E$ to $J$,
activate all the resource agents specified in $J$ by invoking proper queries to them.
That is, 

\begin{itemize}

\item for each negative occurrence  $F[G_1\add G_2]^\omega$ in $J$ which is not
not already in session, 
    activate $\omega$ by querying $F^\mu$ to $\omega$. 
 Here $\mu$ is the current machine. Mark $F$ in session for
 $\omega$'s sake. 
 
\item for each positive occurrence  $F[G_1\adc G_2]^\omega$ in $J$ which is not
not already in session, we first replace it with 
$\gneg\ (\gneg F[G_1\adc G_2])^\omega$ and then
    activate $\omega$ by querying $(\gneg F[G_1\adc G_2])^\mu$ to $\omega$. 
 Here $\mu$ is the current machine; Mark $\gneg F$ in session for
 $\omega$'s sake.

\end{itemize}
%   On the other hand, we assume that all the querying agents -- which appear positively in $J$ --
 %   are already active. 
        
\item The second stage is to play $J$ according to the following $loop$ procedure
(which is from \cite{JapCL1}): 

\end{enumerate}

procedure $loop(Tree)$: \%  $Tree$ is a proof tree of $J$ \\

{\bf Case} $E$ is derived by Rule (A): \\
    \hspace{3em}         Wait for the matching adversary $\omega$ to make a move  
    $\alpha =\beta i$, where $\beta$ \ $E$-specifies a  positive (negative) surface occurrence of a
    subformula $G_1\adc\ldots\adc G_n$ ($G_1\add\ldots\add G_n$) and $i\in\{1,\ldots,n\}$. 
  Let $H$ be  
the result of substituting in $E$ the above occurrence by $G_i$. Then update $E$ to $H$. \\

 {\bf Case} $E$ is derived by Rule (B): \\
  \hspace{3em}       Let $H$ be the premise of $E$ in the proof. $H$ is the result of substituting, in $E$, a certain negative (resp. positive) surface occurrence of a subformula $G_1\adc\ldots\adc G_n$ (resp. $G_1\add\ldots\add G_n$) by $G_i$ for some $i\in\{1,\ldots,n\}$. 
Let $\beta$ be the $E$-specification of that occurrence. Then make the move $\beta i$, update $E$ to $H$. Let $\omega$ be the matching environment. Then inform $\omega$ of the move  $\beta i$.

%     \STATE {\textbackslash \textbackslash \textbf{current$\leftarrow$ premise of current}}
The following proposition has been proved in \cite{JapCL1}.

\begin{proposition}\label{sound}
%\marginpar{sound}
  $\propel\vdash F$ iff $F$ is valid {\em (}any  $\propel$-formula $F${\em )}. 
\end{proposition}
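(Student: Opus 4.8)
The plan is to prove the two directions separately: soundness ($\propel\vdash F$ implies $F$ is valid) by induction on the length of the CL1-derivation, and completeness ($F$ valid implies $\propel\vdash F$) by induction on the number of choice operators in $F$. Both arguments rest on the same basic observation about the game $F^*$: the only moves available to either player are resolutions of surface occurrences of $\add$ and $\adc$, each such move strictly decreases the number of choice operators in the formula, and once none remain the position is an elementary formula whose winner is fixed by its classical truth value. Hence every $\propel$-game is a finite, perfect-information game with no draws, and is therefore determined; a winning strategy can be described move-by-move along the induction. Throughout I track polarities so that a surface $\add$ in positive position and a surface $\adc$ in negative position are the \emph{machine's} moves, while a surface $\adc$ in positive position and a surface $\add$ in negative position are the \emph{environment's}.

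For soundness I would read a machine strategy off the derivation. If $F$ comes from $H$ by Rule (B), then $H$ is $F$ with one of the machine's surface occurrences resolved to some $G_i$; the strategy makes exactly that move and then, by the induction hypothesis, plays the winning strategy for the valid formula $H$. If $F$ comes from $\vec H$ by Rule (A), the machine waits. Either the environment resolves one of its surface occurrences, in which case the side condition of Rule (A) guarantees that the resulting position is a premise in $\vec H$, valid by the induction hypothesis, so the machine continues; or the environment never moves, in which case each unresolved choice counts against its owner exactly as recorded by elementarization ($\add\mapsto\oo$, $\adc\mapsto\pp$), and since $F$ is stable its elementarization is classically valid and the machine wins the residual elementary game. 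The base case is a Rule (A) instance with empty $\vec H$, which is covered by the same waiting argument.

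For completeness I would prove the contrapositive: if $\propel\not\vdash F$ then $F$ is not valid, by induction on the number of choice operators. If the unprovable $F$ is stable, Rule (A) can fail only because some demanded premise $H$ --- a reduct obtained by an environment move --- is itself unprovable; by the induction hypothesis $H$ is invalid, and prefixing the corresponding environment move turns a countermodel of $H$ into an interpretation under which the environment wins $F$. If $F$ is instable, Rule (A) is unavailable, so unprovability forces every reduct $H$ obtained by a \emph{machine} move to be unprovable as well, hence invalid by the induction hypothesis. To refute $F$ the environment must now sit still and let instability take effect while remaining prepared to beat every possible machine first move, which means assembling the separate countermodels of the individual machine-reducts, together with the classical countermodel witnessing instability, into one coherent interpretation.

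This last synthesis in the instable case is the step I expect to be the real obstacle. The difficulty is that the reducts $H$ may be refuted by mutually incompatible assignments to the atoms, whereas invalidity of $F$ must be witnessed by a \emph{single} interpretation that defeats the machine no matter which option it takes. Overcoming it is the genuine content of the proof: one reads a single coherent atomic assignment off the entire failed proof search --- in effect a Hintikka-style countermodel --- using that these games are finite and determined, so that ``the machine has no winning strategy'' coincides with ``the environment has one,'' and using that every surface move lowers the choice-operator count so the search, and the induction, are well founded. By comparison the soundness direction and the stable case of completeness are routine, and the main residual hazard is clerical: keeping the polarity conventions straight so that the machine's and the environment's moves are never interchanged.
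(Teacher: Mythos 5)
First, be aware that the paper does not actually prove this proposition: it states that it ``has been proved in \cite{JapCL1}'' and gives no argument, so the comparison here is really between your sketch and Japaridze's original proof of the soundness and completeness of $\propel$. Your soundness direction is essentially that proof and is fine: Rule (B) steps are the machine's own moves, Rule (A) steps are handled by waiting, every legal move strictly decreases the number of choice operators so runs are finite, and stability guarantees a win in the residual position via the elementarization ($\add\mapsto\oo$, $\adc\mapsto\pp$). The stable half of completeness is likewise standard.

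The genuine gap is exactly at the step you yourself flagged, and your proposed fix does not work as stated. In the instable case you want to assemble the countermodels of the individual machine-reducts, together with the classical countermodel witnessing instability, into what you call ``a single coherent atomic assignment \ldots\ a Hintikka-style countermodel.'' If an interpretation were a truth assignment, no such merge can exist: take $F = p\add\gneg p$. It is unprovable (it is instable, and both reducts $p$ and $\gneg p$ are unprovable), yet under any single truth assignment one of the two disjuncts is true, so the machine that selects that disjunct wins; hence no single atomic assignment refutes $F$. The resolution is that in computability logic an interpretation sends an atom not to a truth value but to an elementary game, i.e., a predicate over valuations, and the machine must win on \emph{all} valuations of the chosen interpretation. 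Japaridze's completeness argument exploits precisely this freedom: the finitely many mutually incompatible classical countermodels read off the failed proof search are distributed over disjoint parts of the domain, so that an atom is interpreted as a predicate that is true on the valuations reserved for one branch and false on those reserved for another; whichever reduct the machine moves to, it loses on the valuations assigned to that branch. Determinacy of these finite games, which you invoke, is correct but does not by itself produce such an interpretation. Your outline is therefore recoverable only after replacing ``single atomic assignment'' by ``single interpretation into non-constant predicates,'' which is the actual content of the completeness proof and is nowhere supplied in your sketch.
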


The following  proposition follows easily from Proposition \ref{sound}, together with
the observation that $\propel$-proof of $F$ encodes an {\it environment-independent}
winning strategy for $F$.
The following is our  theorem  \cite{JapCL1}.

\begin{proposition}\label{sound2}
 Let $m$ be a machine above with empty knowledgebase and a $\propelw$-formula query $J$.
 Then the following holds:
\begin{enumerate}

  \item $\propelw\vdash J$ iff $J$ is valid. \\

  \item Furthermore, the following holds:
  \begin{itemize}
   \item  If  $\propelw\vdash J$, then  $m$    wins $J^*$ for every interpretation *.
   \item If $\propelw\vdash J$ does not hold, then $J^*$ is not computable for some interpretation.
   \end{itemize}
   \end{enumerate}
  
 \end{proposition}

\begin{proof}
Let $F$ be $skeleton(J)$.
   It is known from \cite{JapCL1} that
  every $\propelw$(/$\propel$)-proof of $J$ encodes an environment-independent winning strategy
  for $J$. It follows that a machine with such a strategy -- $Ex(J)$ -- wins $J$ against
  any environment. In particular, if  $J$ is stable, $\alpha = H$, $F_1^{\alpha}$ is in $J$ and $H\mli F_1$ does not have a proof,
 $\alpha$  does not make any moves  and $m$ wins because $J$ is stable.
   Hence $J$ is valid. Conversely, suppose there is no
  $\propelw$/$\propel$-proof of $J$.  Since $\propelw$-proof of $J$ is in fact
  identical to $\propel$-proof of $F$, it follows from \cite{JapCL1} that there is
  no machine who can win $F^*$ for some interpretation $*$. Therefore $F$ is not valid.
     \end{proof}

\subsection{Execution Model for Multiple Queries}

We now describe a machine model with nonempty knowledgebase and a sequence of queries to process.
It is designed to solve these  queries using its knowledgebase.

We assume that every agent 
processes multiple queries in a sequential fashion.
To do this, it maintains a queue  for storing multiple queries
$[ Q_1,\ldots,Q_n ].$
We assume that an agent $m = H_1$  processes $[ Q_1,\ldots,Q_n ]$ by executing the following $n$
 procedures {\it sequentially}:

\[ Ex(H_1\mli Q_1), Ex(H_2\mli Q_2),\ldots, Ex(H_n\mli Q_n) \]
\noindent
Here we assume that, for $i = \{ 1,\ldots,n-1 \}$, $H_i$ evolves to $H_{i+1}$ after performing $Ex(H_i\mli Q_i)$.

The following is a straightforward generalization of Proposition \ref{sound}.

\begin{proposition}\label{sound2}

Let $m$ be a machine with empty knowledgebase and incoming queries $[J_1,\ldots,J_n]$.
Then the following holds: \\
  
  \begin{enumerate}
  \item For all $i$, $\propelw\vdash J_i$ iff $J_i$ is valid {\em (}any $\propelw$-formula $J_i${\em )}. 
  \item Furthermore, the following holds:
  \begin{itemize}
   \item  If  $\propelw\vdash J_i$, then  $m$    wins $J_i^*$ for every interpretation *.
   \item If $\propelw\vdash J_i$ does not hold, then $J_i^*$ is not computable for some interpretation.
   \end{itemize}
     \end{enumerate}
  \end{proposition}
  
   Now we consider a machine with  nonempty knowledgebase.
An agent with nonempty knowledgebase processes queries in a way that it preserves soundness but not completeness.
For example, suppose $m$ has knowledgebase  $p\adc q$ with two queries $[ p\adc p, p\adc q ] $. 
Although both queries are
a logical consequence of $m$, solving the second query will fail. This is because $m = p\adc q$  would
evolve to $m = p$ after solving the first query.

\begin{proposition}\label{sound3}

Let $m$ be a machine with nonempty knowledgebase $H$ and incoming queries $[J_1,\ldots,J_n]$.
Assume $H$ evolve to $H_i$ after solving $J_1,\ldots,J_{i-1}$. Then the following holds:
  
\begin{enumerate}
\item $\propelw\vdash H_i\mli J_i$ iff $H_i\mli J_i$ is valid {\em (}any $\propelw$-formula $H_i,J_i${\em )}. 
Furthermore,  \\

 if $\propelw\vdash H_i\mli J_i$, then   $m$ wins $(H_i\mli J_i)^*$ for every interpretation *.   \\
\item If  $\propelw\vdash H_i\mli J_i$, then $\propelw\vdash H\mli J_i$.
%\item   $H_i\mli J_i$  is valid iff $/m$ wins $(H_i\mli J_i)^*$.
\item (Soundness:) If $m$ wins $(H_i\mli J_i)^*$ for every interpretation *, then $H\mli J_i$  is valid. 
\end{enumerate}
\end{proposition}

\begin{proof}
Proof of (1): It is an easy consequence of  Proposition \ref{sound}.

Proof of (2): It is easy to  observe that if $H$ evolves to $H_i$, then $H_i$
  is a logical consequence of $H$. Hence, $J_i$ is a logical consequence of $H$.
  
Proof of (3):    If $m$ successfully solves $H_i\mli J_i$, then it follows from (1) that 
 $H_i\mli J_i$ has a proof. Then it follows from (2) that $H\mli J_i$ has a proof.
 It follows from Proposition \ref{sound} that $H\mli J_i$ is valid.
\end{proof}

\section{Examples}
 
One example  is provided by the 
 following ``weather'' agent which contains today's weather (we assume today is cloudy) and
 temperature (we assume today is hot).

\begin{exmple}
\> $weather = cloudy \mlc hot$.\\
%\>$htext(www.krx.com/graphdoc.html)$. \%  web page \\
%\>$path(X,Y)$ ${\rm :-}$ \> \hspace{4em} $edge(X,Z), path(Z,Y).$\\
%\>$spantree(Tree)$ $ {\rm :-}$ \> \hspace{8em}$\ldots.$\\
\end{exmple}

Our language  permits  `querying knowledge' of the form
$Q^\omega$ in KB. This requires the current machine to
invoke the query $Q$ to the agent $\omega$. 
Now let us consider the $dress$ agent which gives
advice on the dress codes according to the weather condition.
It contains the following four rules and 
two querying  knowledges  $(cloudy \add  sunny)$ and $(hot \add  cold)$ relative
to the $weather$ agent.

\begin{exmple}
\> $dress =$\\
%\>$htext(www.kair.com/flydoc.html)$. \%  web page \\
\> \% dress codes \\
\>$(cloudy \mlc hot)\mli green$. \\
\>$(sunny \mlc hot) \mli yellow$.  \\
\>$(cloudy \mlc cold)\mli blue$. \\
\>$(sunny \mlc cold) \mli red$.  \\
\> $(cloudy \add  sunny)^{weather}$\\
\> $(hot \add  cold)^{weather}$.\\
\end{exmple}

Now, consider a machine $m$ trying to solve the query 
?- $ (green \add blue \add yellow \add red)^{dress} \mli (green \add blue \add yellow \add red)^{user} $ with respect to
empty knowledgebase.
This is written as
\begin{exmple}
\> $m = \pp$, \\
\> $[ (green \add blue \add yellow \add red)^{dress} \mli  (green \add blue \add yellow \add red)^{user} ].$
\end{exmple}

Solving this goal  has the effect of activating $dress$ and invoking two queries
$(cloudy \add  sunny)$  and $(hot \add cold)$ to the $weather$ agent.
At this stage, the $dress$ and $weather$ agents 
remain active and communicate with each other.
To be specific,  the $weather$ solves these two queries
using $\propelw$ proof and the $Ex$ procedure in the previous section.
  This would result in replacing 
 $(cloudy \add  sunny)$  with $cloudy$ and
$(hot \add cold)$ with $hot$. Now the $dress$ agent -- again via  the 
$\propelw$ proof and the $Ex$ procedure -- will   answer
 $green^{\pp}$ to the machine. The machine chooses $green^{user}$ and informs the user. Note that two queries to $weather$ execute concurrently within $weather$.

\section{Conclusion} \label{s5thr}

In this paper, we proposed  a multi-agent programming model based on $\propel$.
Unlike other formalisms such as LogicWeb\cite{Loke} and distributed logic programming\cite{LCF},
this model supports evolving knowledgebase which is essential for future computing model.
Our next goal is to replace $\propel$ with much more expressive $\propeltw$\cite{Japtow}.

\section{Acknowledgements}

We thank Giorgi Japaridze for many helpful comments.

\bibliographystyle{plain}

\end{document}